\documentclass[11pt]{article}
\usepackage{amsmath,amssymb,amsfonts} 
\usepackage{epsfig} \usepackage{latexsym,nicefrac,bbm}
\usepackage{xspace}
\usepackage{color,fancybox,graphicx,fullpage}
\usepackage[top=1in, bottom=1in, left=1in, right=1in]{geometry}
\usepackage{tabularx} 
\usepackage{hyperref} 
\usepackage{pdfsync}
\usepackage{multicol}
\usepackage{physics}

\usepackage{enumitem}
\usepackage{algorithm}
\usepackage{algpseudocode}
\usepackage{tikz}
\usetikzlibrary{calc}
\usepackage{amssymb}
\usepackage{amsmath}
\usepackage{amsthm}
\usepackage{multirow}
\usepackage{booktabs} 
\usepackage[justification=centering]{subfig}

\usepackage{array}
\usepackage{arydshln}
\setlength\dashlinedash{0.2pt}
\setlength\dashlinegap{1.5pt}
\setlength\arrayrulewidth{0.3pt}

\clubpenalty=10000
\widowpenalty = 10000
\def\qed{\hfill\ensuremath{\square}}

\newcommand\R{\mathbb R}

\newtheorem{theorem}{Theorem}[section]

\newtheorem{definition}{Definition}[section]

\newtheorem{remark}[theorem]{Remark}

\renewcommand{\norm}[1]{\ensuremath{\left\lVert #1 \right\rVert}}
\interfootnotelinepenalty=10000

\renewcommand{\norm}[1]{\lVert #1 \rVert}
\newcommand{\br}[1]{\left\{#1\right\}}           
\newcommand{\inp}[1]{\langle #1\rangle}
\def\h_#1{\hat{#1}}
\def\wh_#1{\widehat{#1}}




\title{\bf Improved Adversarial Learning for Fair Classification}

\usepackage{authblk}

\author[1]{L. Elisa Celis}
\author[2]{Vijay Keswani}
\affil[1]{Yale University}
\affil[2]{\small \'{E}cole Polytechnique F\'{e}d\'{e}rale de Lausanne (EPFL), Switzerland}

\begin{document}

\maketitle
 
\begin{abstract}
Motivated by concerns that machine learning algorithms may introduce significant bias in classification models, developing fair classifiers has become an important problem in machine learning research.
One important paradigm towards this has been providing algorithms for adversarially learning fair classifiers \cite{madras2018learning, zhang2018mitigating}.
We formulate the adversarial learning problem as a multi-objective optimization problem and find the fair model using gradient descent-ascent algorithm with a
modified gradient update step, inspired by the approach of \cite{zhang2018mitigating}.
We provide theoretical insight and guarantees that formalize the heuristic arguments presented previously towards taking such an approach.
We test our approach empirically on the Adult dataset and synthetic datasets and compare against state of the art algorithms \cite{celis2018classification, zafar2017fairness, zhang2018mitigating}.
The results show that our models and algorithms have comparable or better accuracy than other algorithms while performing better in terms of fairness, as measured using statistical rate or false discovery rate.

\end{abstract}

\newpage

\section{Introduction}
Recent studies have brought to attention the problem of bias in machine learning algorithms. 
\cite{kay2015unequal} showed that Google Image search results for occupations are more gender-biased compared to the ground truth and that this affects the perspective of women in these occupations.
\cite{flores2016false, berk2009role} argued that classification algorithm to predict criminal recidivism can be race-biased due to biased data and algorithms.
Even in the context of online advertising, \cite{datta2015automated} demonstrated that women are less likely to be shown advertisements of high-paying jobs. 
With such an important gap in the design of these algorithms, it is important to study and provide models for fair classifiers for all kinds of datasets and settings.

We consider the paradigm of adversarial learning to design fair classifiers.
The goal of adversarial learning is to pit the training algorithm against an adversary which tries to determine whether the trained model is ``robust'' enough or not. 
Popularized by Generative Adversarial Networks (GANs) \cite{goodfellow2014generative}, which are used generate fake samples from the unknown training distribution, the adversary's job is to decide whether the sample generated is real or fake and give its feedback to the generator, which then uses the feedback to improve the model.
In applications such as fair classification, an adversary can be introduced to check whether the trained classifier is fair or not.
If the model is deemed to be not fair in terms of the chosen metric, the training algorithm uses the feedback from the adversary to modify the model and the process is repeated.

%
\subsection{Our Contributions: }
While adversarial fairness has been explored in other papers such as \cite{zhang2018mitigating, wadsworth2018achieving}, we theoretically and experimentally analyze the current suggested models and suggest better performing algorithms and models.
We employ a fairness-metric specific model for the adversary (Section~\ref{sec:model}) and show that it performs better than \cite{zhang2018mitigating} and other related work for real-world and adversarial datasets (Section~\ref{sec:experiments}).

The \textit{modified gradient} update algorithm we use is similar to the work of \cite{zhang2018mitigating}, but we suggest certain variations to improve the performance, for example, we employ the usa of Accelerated Gradient Descent for noisy gradient oracles \cite{cohen2018acceleration}, which results in a more efficient implementation (Section~\ref{sec:algorithms}).
We also discuss, theoretically and empirically, the difference between the normal and modified gradient update and correspondingly motivate the use of the modified update (Section~\ref{sec:analysis_without_proj}).

We present a general theoretical analysis for the convergence of the Normal Gradient Descent and the Accelerated Gradient Descent with modified gradient updates for multi-objective optimization and give a quantification of the \textit{price of fairness} incurred when perfect fairness is to be ensured (Section~\ref{sec:mod_theory}).
Finally, we design and implement a model to ensure false discovery parity and show that the adversarial model can be extended to other fairness metrics; this is presented in Section~\ref{sec:fdr}.

\subsection{Notation}
Let $S = (x_i, y_i, z_i)_{i \in [N]}$ \footnote{$[N] = \br{1, \dots, N}$.} be the training set of $N$ samples, where $x_i \in \R^n$ is the feature vector of the $i$-th element, $z_i$ is the sensitive attribute of the $i$-th element and $y_i$ is it's class label. 
Let $G_j := \br{i \in [N] \mid z_i = j}$ denote the samples with sensitive attribute value $j$.
We assume that the sensitive attribute and the class label are binary, i.e., $y_i, z_i \in \br{0,1}$ for all $i \in [N]$.

The goal is to design a classifier, denoted by $f : \R^n \rightarrow \br{0, 1}$. Let $L_C$ denote the classification loss (exact expression to be specified later) and $L_F$ denote the fairness adversary loss. 
We will often use logistic regression as the classifier with $\textrm{log-loss}$ as the cost function. For a classifier $f : \R^n \rightarrow [0,1]$, it is defined as the following.
\begin{align*}
\textrm{log-loss}_S(f) = - \frac{1}{|S|} \sum_{x_i, y_i \in S} ( y_i \log(f(x_i)) + (1 - y_i) \log(1 - f(x_i))). 
\end{align*}
We will also use $\sigma(\cdot)$ to denote the sigmoid function, i.e., 
$\sigma(z) = (1 + e^{-z})^{-1},$
and $\Pi_v u$ will be used to denote the projection of vector $u$ on $v$, i.e.,
$$\Pi_v u := \frac{\inp{u,v}}{\inp{v,v}} \cdot v.$$
For theoretical analysis, we may need to assume certain properties on the loss function; in particular, the smoothness property.
A function $f$ is $L$-Lipschitz smooth if for any $u$ and $v$,
$$\norm{\nabla f(u) - \nabla f(u)} \leq L\norm{u-v}. $$
To compare the correlation between vectors, we will use the Pearson correlation coefficient \cite{PearsonCorrelation} which, for two vectors $u, v \in \R^n$, is defined as
\[\frac{\sum_{i =1}^n (u_i - \bar{u})(v_i - \bar{v})}{\sqrt{\sum_{i =1}^n (u_i - \bar{u})^2} \sqrt{\sum_{i =1}^n (v_i - \bar{v})^2}},\]
where $\bar{u}, \bar{v}$ are the means of vectors $u,v$ respectively.

\subsection{Fairness Metrics}
The fairness goal to be satisfied will depend on the fairness metric used. 
We will work with two fairness metrics in this document, statistical parity and false discovery rate, but with appropriate changes, the algorithm and the model can be used for other metrics as well.

\begin{definition}[Statistical Parity]
Given a dataset $S = (x_i, y_i, z_i)_{i \in [N]}$, a classifier $f$ satisfies statistical parity if
$\mathbb{P}[f = 1 \mid G_1] = \mathbb{P}[f = 1 \mid G_0],$
i.e., the probability of positive classification is same for all values of sensitive attribute.
\end{definition}
\noindent
Statistical parity has also been called demographic parity or disparate impact in many related works \cite{zafar2017fairness}. The above condition of parity can also be relaxed to ensure better fairness in certain cases. 
Correspondingly, we also design our algorithm to take the statistical rate $\tau$ as input. It is defined as follows.
\begin{definition}[Statistical Rate]\label{def:sr}
Given a dataset $S = (x_i, y_i, z_i)_{i \in [N]}$ and $0 < \tau \leq 1$, a classifier $f$ has statistical rate $\tau$ if
\[ \tau =  \min \left\{ \frac{\mathbb{P}[f = 1 \mid G_1]}{\mathbb{P}[f = 1 \mid G_0]}, \frac{\mathbb{P}[f = 1 \mid G_0]}{\mathbb{P}[f = 1 \mid G_1]} \right\}.\]
\end{definition}
\noindent
Similarly, false discovery parity is satisfied when the probability of error given positive classification is equal for all sensitive attribute values. Formally, false discovery rate is defined as the following.
\begin{definition}[False Discovery Rate] \label{def:fdr}
Given a dataset $S = (x_i, y_i, z_i)_{i \in [N]}$ and $0 < \tau \leq 1$, a classifier $f$ has false discovery rate $\tau$ if
\[ \tau =  \min \left\{ \frac{\mathbb{P}[Y = 0 \mid f = 1,  G_1]}{\mathbb{P}[Y = 0 \mid f = 1, G_0]}, \frac{\mathbb{P}[Y = 0 \mid f = 1, G_0]}{\mathbb{P}[Y = 0 \mid f = 1, G_1]} \right\}.\]
\end{definition}
\noindent
There are many other fairness metrics that have been considered before, for example, false positive rate, false negative rate, true positive rate, true negative rate, false omission rate, equalized odds, etc. 
The goal of this paper is not to provide a meta-classifier, but rather to understand how to ensure fairness using adversarial programs. 
However, using the framework described in Section~\ref{sec:model}, one can design adversarial programs to ensure different fairness parities, as we demonstrate with statistical rate and false discovery rate.
The reason for choosing false discovery rate over other similar metrics like false negative rate is that the probability is conditioned over the classification label.
Such metrics are useful in cases when false prediction results in additional costs, for example, when classifying whether a person has a medical condition or not. 
False discovery rate has been considered in relatively few earlier works \cite{celis2018classification} and we show that our framework can be used to ensure false discovery parity.

\section{Related Work}
The idea of adversarial machine learning was popularized by the introduction of Generative Adversarial Networks (GANs) \cite{goodfellow2014generative}.
Based on similar ideas, multiple learning algorithms have been suggested to generate fair classifiers using adversaries.

As mentioned earlier, \cite{zhang2018mitigating} proposed a model to learn a fair classifier based on the idea of adversarial debiasing. 
Their algorithm also uses the gradient descent with the modified update but does not include any theoretical guarantees on convergence, and experimentally, they suggest a model to just ensure equalized odds for the Adult dataset.
\cite{wadsworth2018achieving} use a similar adversarial model for COMPAS dataset \cite{compas}, but do not use the modified update for optimization.

A few other papers use adversarial settings to tackle the problem in a different manner.
\cite{madras2018learning} learn a ``fair'' latent representation of the input data and then design a classifier using the representation.
The adversary's job here is to ensure that the representation generated is fair.
On the other hand, \cite{xu2018fairgan} use the GAN framework to generate fair synthetic data from the original training data, which can then be used to train a classifier.
To do so, they add another discriminator to the network which checks for fairness.
The work of \cite{xu2018fairgan} is more comparable to the pre-processing algorithm given by \cite{kamiran2012data, feldman2015certifying, krasanakis2018adaptive, calmon2017optimized}.

Many other models for fair classification have been proposed. They differ either in the formulation of the problem or the fairness metric considered. We try to summarize the major results below.

While \cite{kamiran2012data, feldman2015certifying, krasanakis2018adaptive, calmon2017optimized} gave pre-processing algorithms to ensure that the data is fair, most other fair classification algorithms suggest a new optimization problem or use post-processing techniques.
\cite{celis2018classification, menon2018cost, corbett2017algorithmic} formulate the problem as Bayesian classification problem with fairness constraints and suggest methods to reduce it to an unconstrained optimization problem.
\cite{zafar2017fairness,zafar2017fairnessmis} suggest a covariance-based constraint which they argue can be used to ensure statistical parity or equalized odds.
To deal with the issue of multiple fairness metrics, \cite{celis2018classification, agarwal2018reductions, quadrianto2017recycling} give a unified in-processing framework to ensure fairness with respect to different metrics.
Most of these algorithms can be seen as formulating a regularizer function to ensure fairness and as discussed earlier, such an algorithm may not be able to ensure fairness when the dataset is adversarial.
Also, to the best of our knowledge, only \cite{celis2018classification, quadrianto2017recycling} provide classifiers that can ensure false discovery parity.

The work of \cite{agarwal2018reductions} is perhaps the closest in terms of the techniques involved. 
They formulate their constrained optimization problem as an unconstrained one using Lagrangian transformation. 
This leads to a min-max optimization problem, which they then solve using the saddle point methods of \cite{freund1996game, kivinen1997exponentiated}.
The key difference with respect to our work is that they do not aim to learn the sensitive attribute information from the classifier and instead just use the regularizer.
Furthermore, their formulation does not support \textit{non-linear} metrics like false discovery rate.

To ensure fairness by post-processing, \cite{hardt2016equality} gave a simple algorithm to find the appropriate threshold for a trained classifier to ensure equalized odds. 
Similarly, \cite{goh2016satisfying, pleiss2017fairness, woodworth2017learning} suggest different ways of fixing a decision boundary for different values of the sensitive attribute to ensure that the final classifier is fair.

\begin{figure*}[t]
\centering
  \includegraphics[width=\linewidth]{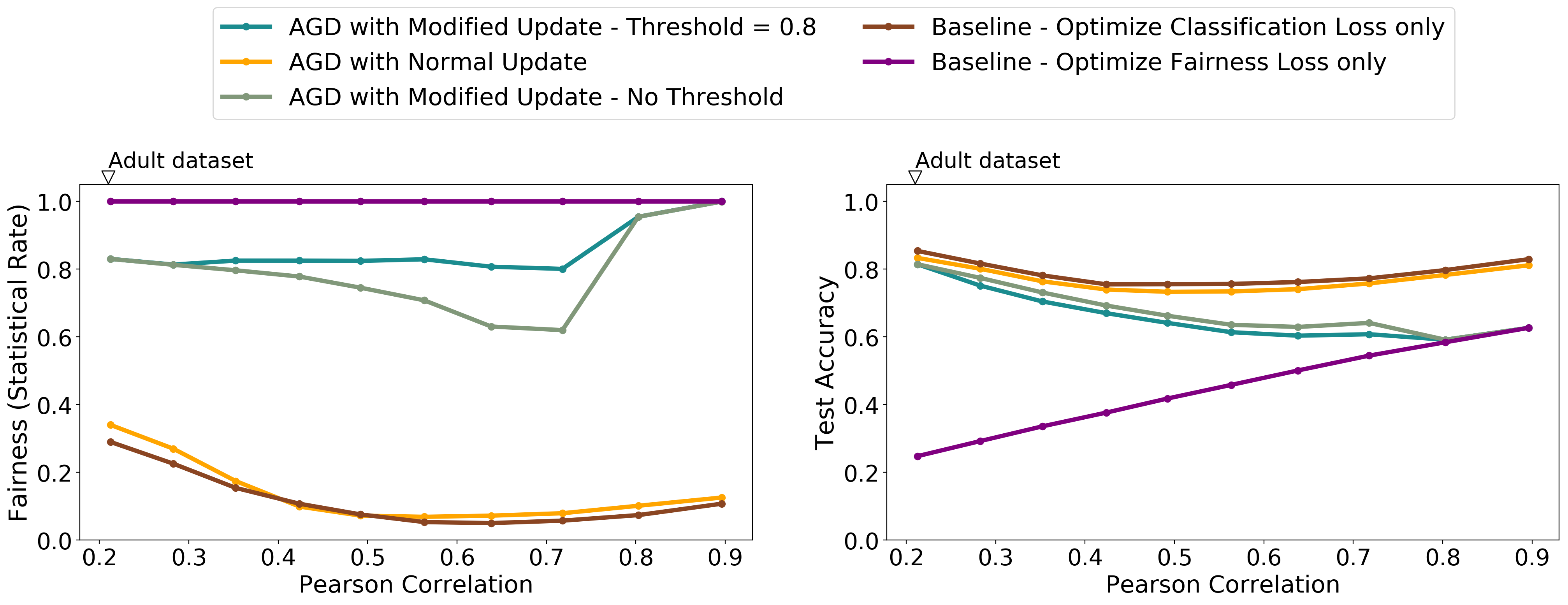}
  \subfloat[Correlation vs Fairness.]{\hspace{.5\linewidth}}
\subfloat[Correlation vs Accuracy.]{\hspace{.5\linewidth}}
\caption{ The figures plot accuracy and statistical rate for different synthetic datasets with varying correlation between the class label and the sensitive attribute.
The algorithms used are AGD with modified update without threshold, AGD with modified update with threshold 0.8, AGD with normal update and baseline algorithms where we optimize only accuracy or only fairness. 
The statistical rate achieved by Modified update is always higher than Normal Gradient Update.
Note that the left-most point on $x$-axis, with correlation = 0.21, represents the Adult dataset (pointed by the arrow).
}
\label{fig:figure9}
\end{figure*}

\begin{figure*}[t]
\centering
  \includegraphics[width=\linewidth]{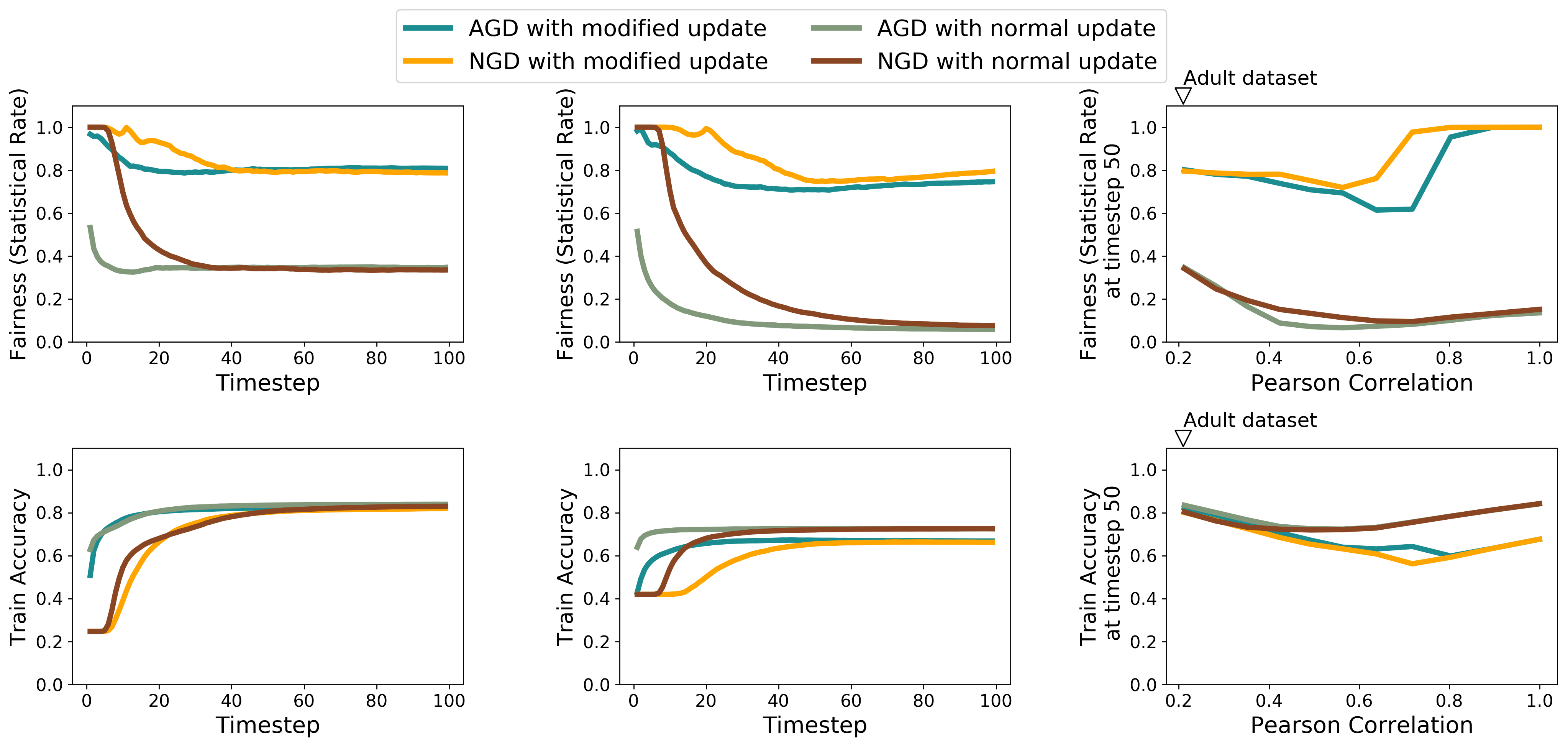}
  \subfloat[\scriptsize{Adult Dataset, Correlation = 0.21.}\label{fig:compare_agd_ngd_adult}]{\hspace{.33\linewidth}}
\subfloat[\scriptsize{Synthetic Dataset, Correlation = 0.50.}\label{fig:compare_agd_ngd_syn}]{\hspace{.33\linewidth}}
\subfloat[\scriptsize{Correlation vs Training Accuracy at Timestep 50.}\label{fig:compare_corr_train}]{\hspace{.33\linewidth}}
\caption{ Plots in (a) represent the Training accuracy and Statistical rate across the Gradient Descent for different algorithms on the Adult dataset and plots in (b) represents the same variables for a synthetic dataset with correlation = 0.5.
Note that the algorithms with modified update perform achieve much better fairness than algorithms with normal update and AGD with modified update converges to good accuracy and fairness faster than NGD with modified update.
Plots in (c) show how the training accuracy and fairness at timestep 50 varies for datasets with different correlation coefficients.
Once again, the algorithms with modified update perform better. The first point on the x-axis represents the Adult dataset, with correlation = 0.21.
AGD is Accelerated Gradient Descent and NGD is Normal Gradient Descent.}
\label{fig:figure1}
\end{figure*}

\section{Model} \label{sec:model}
Let $L_C$ denote the classification loss and let $L_F$ denote the adversary loss. 
The adversary, given parameters $w$ of the classifier, uses extra parameters, say $u \in \R^m$, to deduce the sensitive attribute information from the classifier. 
The job is to find the classifier $f$ which minimizes $L_C$, while the adversary tries to maximize $L_F$. Formally, we want to find the parameters $w^*, u^*$ such that
\[L_C(w^*) = \min_{w} L_C(w),\]
\[L_F(u^*, w^*) = \max_{w,u} L_F(w,u).\]
In practice, however, we cannot always hope that the best classifier always satisfies the fairness constraints. 
Correspondingly, we aim to find a model satisfying the following definition.
\begin{definition}[Solution Characterization] \label{defn:solution}
Given classification loss $L_C(w)$ and adversary loss $L_F(u,w)$, for $\varepsilon, \delta > 0$, a model with parameters $(\h_w, \h_u)$ is an $\varepsilon,\delta$-solution if
\[L_C(\h_w) \leq \min_{w} L_C(w) + \delta,\]
\[L_F(\h_u, \h_w) \geq \max_{u,w} L_F(u,w) - \varepsilon.\]
\end{definition} 
\noindent
Consider the example of logistic regression for classifier and adversary, i.e., $f = \sigma(w^\top x)$ and 
\[L_C = \text{log-loss}(\sigma(w^\top x), y), L_F =  - \text{log-loss}(\sigma(u^\top f(x)), z)\]
where $\hat{y}$ is the predictions from the classifier. 
The classifier here tries to correctly predict the class label, while the adversary tries to deduce the sensitive attribute information from the classifier output.
This model is similar to the one considered in \cite{zhang2018mitigating} for the Adult dataset.

For this example, the functions $L_C$ and $L_F$ do not have a unique optimizer unless the feature matrix is full-rank. 
Since in general we can expect that the number of samples is greater than the dimension of vectors $x$, there will be multiple optimizers for the above loss function and correspondingly there will be multiple $\varepsilon,\delta$-solutions. 
The same argument holds for any model which uses thresholding or sigmoid-like functions.

\subsection{Model used for Classifier and Fairness Adversary - Statistical Parity}
In this section, we define the model we use for classification and adversary, with the goal of ensuring statistical parity.
We also provide the model and results when the fairness metric is false discovery rate in a later section.

\subsubsection{Classifier}
The model considered is the regularized logistic regression model. For a given weight vector $w \in \R^{n+1}$, 
$f = \sigma(w^\top \hat{x}),$
where $\sigma(\cdot)$ is the sigmoid functions. 

Similar to the structure of Generative Adversarial Networks, we wither add some noise to input of the classifier so as to make it partially randomized or add an additional vector with value 1 to act as the bias. 
Correspondingly, $\hat{x} = [x \text{ } \eta]$, where $\eta$ is either uniformly chosen from $[0,1]$ or is fixed to be 1.

\begin{remark}[Reason for adding noise or 1s vector]
If the sensitive attribute and the class label are highly correlated, the only way for the classifier to satisfy statistical parity is to output a random class label for each data-point. 
In this scenario, the algorithm can make $w_{n+1}$ much larger than other weights ensuring that the output classifier is random. 
We investigate this empirically in Section~\ref{sec:noise}.
As expected, as the dataset becomes more adversarial, the weight given to the final element increases.
\end{remark}

\subsubsection{Classification Loss}
The corresponding classification loss function is  $L_C(w) = \textrm{log-loss}_S(f) + \frac{1}{2} \norm{w}^2.$
The loss function is the standard one for regularized logistic regression.

\subsubsection{Fairness Adversary}
Since we want to ensure statistical parity, we look at how well we can predict the sensitive variable $z$ using the classifier output. Correspondingly, the fairness adversary will be a classifier $g$, where for a particular $d > 0$ and $u \in \R^{d+1}$,
$g = \sigma(u^\top\hat{f}(x)).$
$\hat{f} \in \R^{d+1}$ is the polynomial expansion of $w^\top x$ of degree $d$, i.e., $\hat{f}(x) = [1 \: w^\top x \: (w^\top x)^2 \: \cdots \: (w^\top x)^d]$.
In the rest of the document, we will use $d = 2$ unless explicitly specified.

\subsubsection{Fairness Adversary Loss}
The fairness adversary loss function is
\begin{align*}
L_F(u,w) =  - \textrm{log-loss}_S(g) - \frac{\mu}{2} \left( \sum_{i \in [N] \mid z_i = 0} \frac{w^\top x_i}{\mathbb{P}[G_0]} - \sum_{i \in [N] \mid z_i = 1} \frac{w^\top x_i}{\mathbb{P}[G_1]} \right)^2.
\end{align*}
where $\mathbb{P}[G_j]$ is the probability that the sensitive attribute is $j$ in the training set.
The first part of the loss function corresponds to learning the correlation between the sensitive attribute and the class label.
The second part of the loss function is a regularizer to check whether fairness is satisfied. 
Maximizing the adversary loss would ensure that $f(x_i)$ can be predicted using $z_i$ and that the statistical rate is low.

The reason for choosing such a regularizer is that intuitively, statistical parity will be ensured if $w^\top x_i$ is equally distributed across all groups in the dataset. 
However, since both the sensitive attribute values may not be present equally in the training set, we divide the elements by $\mathbb{P}[G_i]$.

\begin{remark}[Choice of adversary]
The adversary chosen here is different than the one suggested in \cite{zhang2018mitigating}. 
There they design the adversary to predict the sensitive attribute from the classifier output.
However, this will not ensure fairness when the sensitive attribute and class label are highly correlated, or when the dataset has very few elements for a particular sensitive attribute value. 
The model we suggest tries to learn the correlation between sensitive attribute and class label and uses the fairness metric as a regularizer term.
\end{remark}

\section{Algorithms} \label{sec:algorithms}
To solve the min-max problem, we can use an alternating gradient ascent-descent algorithm, which simultaneously aims to minimize $L_C$ and maximize $L_F$. We list below the algorithms we use for our experiments and analysis.

\subsection{Gradient Descent/Ascent with Normal Update} \label{alg:grad_normal_update}
Using a normal gradient descent/ascent algorithm would imply moving in the direction $- \nabla_w L_C$ to minimize $L_C$ and $\nabla_w L_F$ to maximize $L_F$. Combining the two directions with a controlling parameter $\alpha > 0$, we get the algorithm with the following updates at each step.
\[u_{t+1} = u_t + \eta_1 \nabla_{u} L_F, \]
\[w_{t+1} = w_t - \eta_2(\nabla_w L_C - \alpha \cdot \nabla_w L_F),\]
for some $\eta_1, \eta_2, \alpha > 0$.

\begin{figure*}
\centering
  \includegraphics[width=\linewidth]{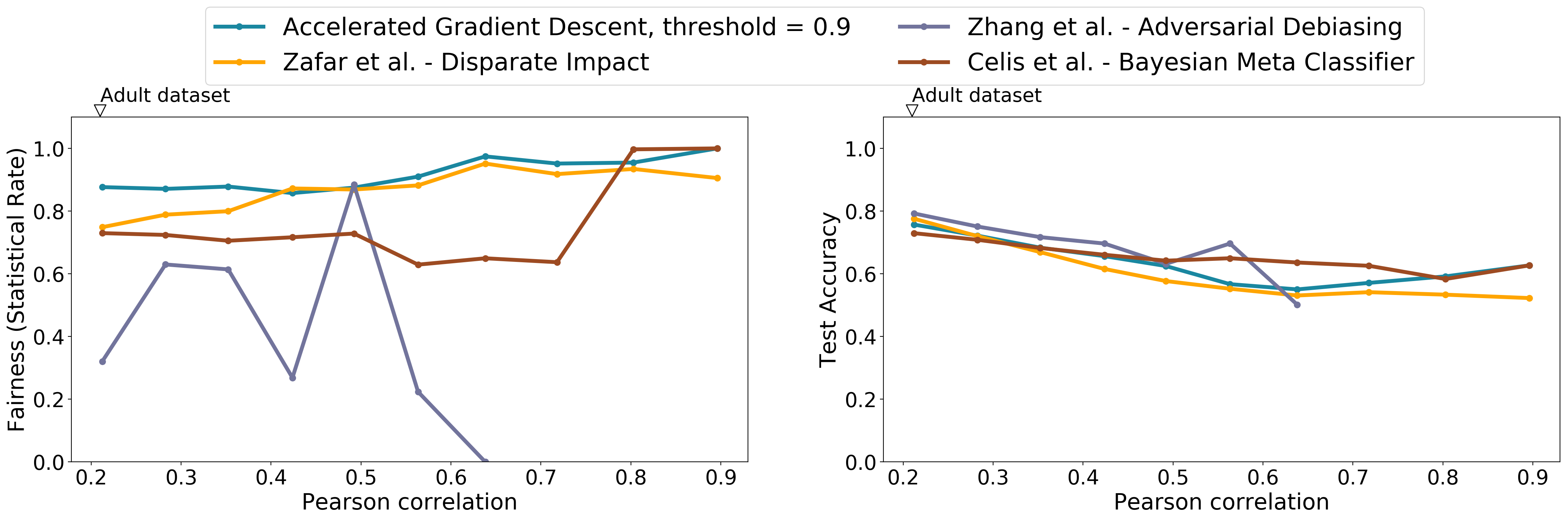}
\caption{Comparison of test accuracy and statistical rate obtained using our algorithms vs related work for varying correlation between the class label and the sensitive attribute.
Note that the left-most point with correlation = 0.21 corresponds to the Adult dataset.
For \cite{zhang2018mitigating}, the datasets with last three correlation coefficients returned all 0s as the predictions and so we do not plot it.}
\label{fig:figure12}
\end{figure*}

\subsection{Algorithm 1 - Normal GD with Modified Update} \label{alg:grad}
In certain cases where the gradient of the fairness loss and the gradient of the classification loss are highly correlated, Algorithm~\ref{alg:grad_normal_update} will not be able to ensure both fairness and accuracy. 
We provide examples and theoretical analysis of such cases in Section~\ref{sec:analysis_without_proj}. 

However, with a simple modification we can ensure that even if gradient of the fairness loss and the gradient of the classification loss are highly correlated, the output classifier is fair.
To that end, we consider the modified update step, where we remove the projection of $\nabla_w L_F$ from $\nabla_w L_C$ from the update.
With an appropriate starting point, at each iteration we use the following update steps.
$$u_{t+1} = u_t + \eta_1 \nabla_{u} L_F,$$
\[w_{t+1} = w_t - \eta_2(\nabla_w L_C - \alpha \cdot \nabla_w L_F - \Pi_{\nabla_w L_F} \nabla_w L_C),\]
for some $\eta_1, \eta_2> 0$ and $\alpha = 1 / \sqrt{t}$.
Though the modified update step we use is inspired by the work of \cite{zhang2018mitigating}, the models and analysis we use are quite different from their work.
\noindent
\textbf{Thresholding: }
Since the problem is a multi-objective optimization problem, we cannot expect to converge to the optimal point (or close) after $T$ iterations.
Therefore, instead of running the algorithm for a large number of iterations, we use a thresholding mechanism.
Given a threshold $\tau > 0$, during training we will record the parameters $w_t,u_t$ for which the training statistical rate $\geq \tau$ and the training accuracy is maximum.
The threshold $\tau$ can be taken as an input and can be considered a way for the user to control the fairness of the output classifier.

\subsection{Algorithm 2 - AGD  with Modified Update} \label{alg:acc_grad}
We modify the earlier alternating gradient ascent-descent algorithm to get an accelerated algorithm. 
The accelerated algorithm we use is inspired by the work of \cite{cohen2018acceleration}, where they provide an improved Accelerated Gradient Descent method that works for noisy gradient oracles. 
Similar accelerated methods of smooth optimization for different kinds of inexact oracles were also given by \cite{d2008smooth, devolder2014first}.

Assume that  $L_C$ is $L_1$-smooth and $L_F$ is $L_2$-smooth. 
With an appropriate starting point, at each iteration we use the following update steps.
\[u_{t} = u_{t-1} + \eta \nabla_{u} L_F,\]
\[g(w) = \nabla_w L_C - \alpha \cdot \nabla_w L_F(w) - \Pi_{\nabla_w L_F(w)} \nabla_w L_C(w)  ,\]
\[p_{t} = \frac{A_{t-1}}{A_t} q_{t-1} + \frac{a_t}{A_t}\nabla \psi(w_{t-1}), \]
\[w_t = w_{t-1} - a_t g(w_{t-1}),\]
\[q_{t} = \frac{A_{t-1}}{A_t} q_{t-1} + \frac{a_t}{A_t}\nabla \psi(w_t), \]
for some $\eta, \alpha > 0$.  
The regularizer function $\psi(\cdot)$ is a strongly convex function. We choose $$\psi(w) = \frac{1}{2} \norm{w}^2$$ for our analysis and experiments. $A_t = \sum_{i=1}^t a_i$ and the numbers $\br{a_i}_i$ are chosen as 
$$a_t = \frac{1}{\alpha L_1L_2\sqrt{t}}.$$

\noindent
\textbf{Thresholding: }
Similar to Algorithm~\ref{alg:grad}, we can use the thresholding mechanism here as well, given input $\tau >0$.

\section{Empirical Results} \label{sec:experiments}
We evaluate the performance of this method empirically, and report the classification accuracy and fairness on both a real-world dataset, and on adversarially constructed synthetic datasets. 
We compare different update methods, and also contrast against state-of-the-art algorithms. 

\subsection{Datasets} \label{sec:dataset}
We conduct our experiments on the Adult income dataset \cite{Dua:2017}. 
%
This dataset has the demographic information of approximately 45,000 individuals, and class labels that determine whether their income is greater than \$50,000 USD or not.
For the purposes of our simulations, we consider the sensitive attribute of gender, which is coded as binary in the dataset. 

Additionally, we construct adversarial synthetic datasets from the Adult dataset in order to show the wide applicability of the model.
The method of constructing the dataset is related to the choice of adversary, since we want to show that our algorithm performs well even for adversarial datasets.
As we noted before, the algorithm with normal gradient updates (Section~\ref{alg:grad_normal_update}) can perform poorly when the sensitive attribute and the class label are highly correlated (Section~\ref{sec:analysis_without_proj}).
Hence, for a given value of the Pearson correlation coefficient, we generate a synthetic dataset, with feature vectors of the Adult dataset, where the class labels are modified to ensure that correlation coefficient between the class label and the sensitive attribute is the given value.
We generate multiple such datasets for varying correlation coefficients and test our algorithm and other state-of-the-art algorithms on these datasets.

\subsection{Performance as a Function of Time}
We first look at the training performance of Algorithm~\ref{alg:acc_grad} and Algorithm~\ref{alg:grad} on the Adult and synthetic datasets.
 
\subsubsection{Adult Dataset: }
For these plots, the parameters chosen for the algorithm are: learning rate, $\eta_1 = \eta_2 = 0.1$, $\alpha = 0.1 / \sqrt{t}$ and number of iterations $= 100$. 
The training accuracy and training statistical rate as presented in the first plot of Figure~\ref{fig:compare_agd_ngd_adult}. 
As can be seen from the figure, for the Adult dataset the algorithm eventually converges to a point of high accuracy and high statistical rate using Algo~\ref{alg:grad}. When using Algo~\ref{alg:acc_grad}, we get similar high accuracy, but Algo~\ref{alg:acc_grad} converges to the point of high accuracy and high fairness faster. 
However, the final statistical rate obtained using Algorithm~\ref{alg:acc_grad} is better.

\subsubsection{Synthetic Dataset: }
The Pearson correlation between the class label and the sensitive attribute for the synthetic dataset used here is 0.5.
Once again, the parameters chosen for the algorithm are: learning rate, $\eta_1 = \eta_2 = 0.1$, $\alpha = 0.1 / \sqrt{t}$ and number of iterations $= 100$. 
The training accuracy and training statistical rate as presented in Figure~\ref{fig:compare_agd_ngd_syn}. 
As can be seen from the figure, for the synthetic dataset both algorithms eventually converges to a point of high accuracy. 
Once again, Algo~\ref{alg:acc_grad} converges faster than Algo~\ref{alg:grad}.

\subsection{Modified vs. Normal Update Steps} \label{sec:compare_normal_mod_update_expt}
We compare the modified and normal gradient update steps. 
Figure~\ref{fig:figure9} shows how the accuracy and statistical rate varies for both the algorithms as the correlation between the class label and sensitive attribute changes.
We run the AGD Algorithm~\ref{alg:acc_grad} in both settings, with and without threshold the threshold parameter $\tau = 0.8$. 
It is clear from the figures that the modified gradient update algorithms achieve much better fairness than the normal gradient update algorithms.

To further show the importance of removing the projection, we also show plots of the statistical rate across the training iterations. 
Figure~\ref{fig:figure1} shows the plots for that setting. 
From Figure~\ref{fig:compare_agd_ngd_adult} and Figure~\ref{fig:compare_agd_ngd_adult}, it is clear that for both Adult and synthetic datasets, the algorithm with modified update performs much better than the algorithm with normal update.
When using AGD, the statistical rate is never high for normal gradient update algorithm.
While for Normal Gradient Descent, the statistical rate is high for normal gradient update algorithm during the initial iterations, it drops very quickly and does not allow the accuracy to be high along with high statistical rate.
Finally, Figure~\ref{fig:compare_corr_train} shows that for all datasets with varying correlation coefficients, the fairness at timestep 50 of training is never high when using the gradient descent algorithm with normal updates.

\begin{figure}[t]
\centering
  \includegraphics[width=0.5\linewidth]{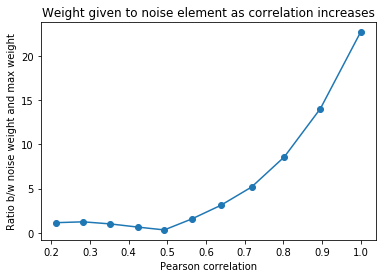}
\caption{Relative Weight given to noise element as correlation increases.}
\label{fig:figure7}
\end{figure}

\subsection{Comparison Against the State of the Art}
We now compare the performance of our proposed methods against the state-of-the-art fair classification techniques. We vary the correlation in the synthetic dataset and report the test accuracy, test statistical rate. 
The synthetic datasets are constructed as discussed in Section~\ref{sec:dataset}.  The threshold $\tau$ is set to 0.9.
We compare our algorithm with the fair classifiers of \cite{zafar2017fairness} \footnote{\url{github.com/mbilalzafar/fair-classification}}, \cite{zhang2018mitigating}  \footnote{\label{note1} \url{github.com/IBM/AIF360}} \footnote{Note that NGD with modified update can be considered the \cite{zhang2018mitigating} implementation for our model and we already showed that AGD with modified update converges faster than it.} and \cite{celis2018classification} \textsuperscript{\ref{note1}}, and present the results in Figure~\ref{fig:figure12}.

We observe that using Accelerated Gradient Descent with threshold 0.9, we get higher or comparable fairness (statistical rate) for all datasets, albeit with a small loss to accuracy.
In particular, when the statistical rate obtained by our algorithm is smaller (for example, when correlation $\sim 0.65$), the accuracy of the classifier is  higher.
Furthermore, by increasing the threshold parameter, our algorithm can always be forced to achieve higher fairness; indeed, from Figure~\ref{fig:figure1}, we know that our algorithm can achieve perfect statistical {parity} during the training process for all synthetic datasets.

We also construct an adversarial model to ensure high false discovery rate of the output classifier.
The model and the empirical comparison with other algorithms are presented in Section~\ref{sec:fdr}.
The empirical results show that our model achieves higher false discovery rate than other algorithms, while the accuracy is comparable for most datasets and slightly lower in other datasets.

\begin{figure*}[t]
\centering
  \includegraphics[width=\linewidth]{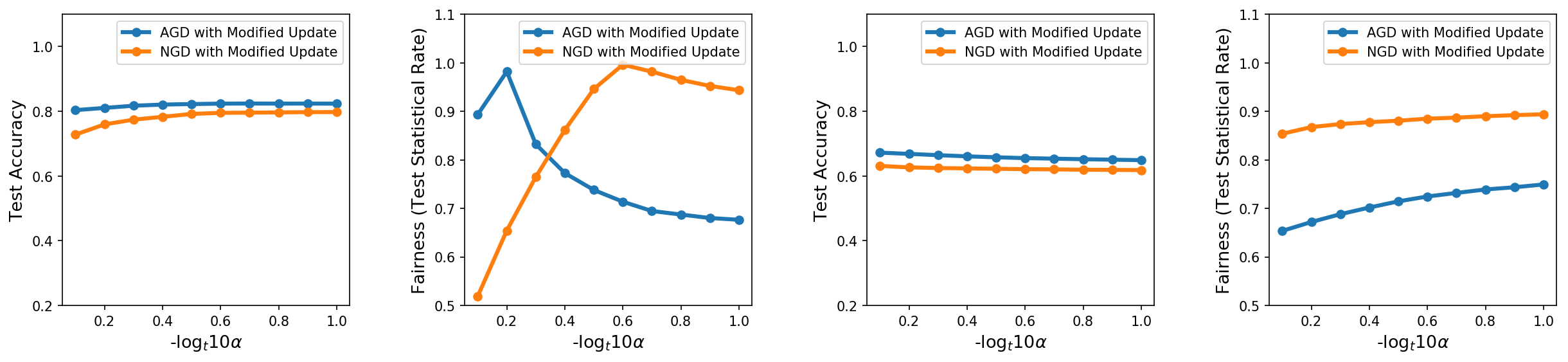}
  \subfloat[Adult Dataset. Correlation = 0.21.]{\hspace{.5\linewidth}}
\subfloat[Synthetic Dataset. Correlation = 0.80.]{\hspace{.5\linewidth}}
\caption{ The two plots for each dataset represent the Test accuracy and Statistical rate using the Gradient Descent for different decreasing functions of $\alpha$. AGD is Accelerated Gradient Descent and NGD is Normal Gradient Descent.}
\label{fig:figure14}
\end{figure*}

\subsection{Other Experiments}

\subsubsection{Importance of noise} \label{sec:noise}
As mentioned earlier, we add a noise or 1s element to the feature vector of each datapoint. As correlation between the class label and the sensitive attribute increases, the only way to ensure high statistical parity is to make the classifier either random or output all 1s. Correspondingly, we hope that the adversary feedback pushes the classifier to make the weight given to the noise element larger.

In this section, we measure this observation. The threshold $\tau$ is set to be 0.8. We measure the ration between the weight given to the noise element and the maximum weight given to any other element in the feature vector, i.e.,
\[\frac{w_\eta}{\max(w_{-1})},\]
where $w_\eta$ is the weight of the noise element and $w_{-1}$ is the weight vector without the noise element. The plot of the ratio against the the correlation in the synthetic dataset is presented in Figure~\ref{fig:figure7}. As we can see from the figure, the weight of the noise element increases as the correlation increases, which is the expected behaviour.

\subsubsection{Changing parameter $\alpha$} \label{sec:alpha_expt}
Recall that in the modified gradient update step, we use
\[g(w) = \nabla_w L_C - \alpha \cdot \nabla_w L_F - \Pi_{\nabla_w L_F} \nabla_w L_C.\]
In this section, we look at the affect of change of $\alpha$ on the accuracy and statistical parity for both Adult dataset and Synthetic dataset.
 Ideally $\alpha$ is chosen as a decaying function of iteration $t$. 
We set it as 
\[\alpha = O\left( \frac{1}{t^p}\right) , \text{ where } p \in \br{0.1, 0.2, \dots, 1.0}.\]
The results are presented in Figure~\ref{fig:figure14}.
As we can see, the test statistical rate and accuracy do not really change with $\alpha$ for both algorithms.

\section{Theoretical Results} \label{sec:theory}
\subsection{Update Step Without the Projection Term} \label{sec:analysis_without_proj}
We look at the gradient descent algorithm with normal gradient updates (Algorithm~\ref{alg:grad_normal_update}) and provide counter-examples for the case when it may not be able to ensure fairness.
Algorithm~\ref{alg:grad_normal_update} uses the following updates in each iteration (assuming $\eta_1 = \eta_2$)  
\[u_{t+1} = u_t + \eta \nabla_{u} L_F, \]
\[w_{t+1} = w_t - \eta(\nabla_w L_C - \alpha \cdot \nabla_w L_F).\]
Assuming that the adversarial loss function is concave,
\begin{align*}
L_F(w_{t+1}, u_{t+1}) - L_F(w_t, u_t) &\leq \inp{\nabla_u L_F(u_t), u_{t+1} - u_{t}} +  \inp{\nabla_w L_F(w_t), w_{t+1} - w_{t}} \\
&\leq \eta \norm{\nabla_u L_F(u_t)}^2 + \alpha\eta\norm{\nabla_w L_F(w_t)}^2 - \eta \inp{\nabla_w L_F(w_t), \nabla_w L_C(w_t)}
\end{align*}
For $\alpha < 1$, if $\inp{\nabla_w L_F, \nabla_w L_C} > \norm{\nabla_u L_F}^2 + \alpha\norm{\nabla_w L_F}^2$ then the gradient update step leads away from the optimal point. 
A simple example of when this can happen is the following: suppose that $L_C(w)$ is a simple logistic regression log-loss function and $L_F(u,w)$ is a regularizer function controlling statistical parity, i.e., $f = \sigma(w^\top x) $ and
\[ L_F(w) = \left( \sum_{(x_i, z_i) \mid z_i = 0} \frac{\sigma(w^\top x_i)}{\mathbb{P}[G_0]} - \sum_{(x_i, z_i) \mid z_i = 1} \frac{\sigma(w^\top x_i)}{\mathbb{P}[G_1]} \right)^2.\]
The adversary here is not a classification/learning problem and is instead a simple regularizer function. 
Suppose the dataset given is skewed in such a way that most of the datapoints with sensitive attribute $z = 0$ have class label $y=0$ as well. 
In this case to achieve high accuracy, statistical parity has to be low. 
Therefore, $\inp{\nabla_w L_F, \nabla_w L_C}$ will be large and $\norm{\nabla_u L_F}$ will be zero.
In such a case, $L_F$ will reduce for all iterations and fairness will never be ensured. 

However, this will not happen if we remove the projection of $\nabla_w L_F$ from $\nabla_w L_C$, since then $$\inp{\nabla_w L_F, \nabla_w L_C - \alpha \cdot \nabla_w L_F - \Pi_{\nabla_w L_F} \nabla_w L_C} = -\alpha \norm{\nabla_w L_F}^2.$$
\begin{remark}
Note that the above scenario can happen for any kind of classification model where the direction of the gradient loss function and the direction of the gradient of regularizer are highly correlated for the given dataset.
\end{remark}

\subsection{The Modified Update Step} \label{sec:mod_theory}
We analyze the modified gradient update step (with the projection term) in the context of multi-objective optimization.

\subsubsection{Analysis of Algorithm~\ref{alg:grad}}
We first look at the normal gradient descent/ascent method using the modified update step. 
The theorems in this section quantify the number of iterations required to ensure fairness and the classification loss achieved after those many iterations.
We will assume that the gradients satisfy the following assumption on their norm-values. 
\begin{definition}[Bounded Gradient] \label{defn:bg} There exists $G>0$ such that for all $w$,
$\frac{1}{G} \norm{\nabla_w L_F(w)} \leq \norm{\nabla_w L_C(w)} \leq G \norm{\nabla_w L_F(w)}.$
\end{definition}

\begin{theorem}[Ensuring Fairness] \label{thm:fairness_bound}
Let $L_F$ be an $L$-smooth function and assume that the $L_F$ and $L_C$ satisfy Assumption~\ref{defn:bg}. Let $\alpha = t^{-c}$, for $c < 0.5$, in Algorithm~\ref{alg:grad}. Let $$D := \max_{u,w} \sqrt{\norm{u-u^*}^2 + \norm{w-w^*}^2}$$ and $\varepsilon > 0$. If the algorithm uses the learning rates 
$$\eta_1 = \frac{1}{2L} \textrm{ and }\eta_2 = \frac{\alpha}{2L(1+G^2)},$$
then after $T$ iterations, where
\[T = O\left( \left( \frac{LD^2(1+G^2)(1-2c)}{\varepsilon} \right)^{\frac{1}{1-2c}} \right),\]
we will have $ L_F(w^*, u^*) - L_F(w_t, u_t) \leq \varepsilon$.
\end{theorem}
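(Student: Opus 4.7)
The plan is to prove per-iteration ascent on $L_F$ using smoothness, exploit the orthogonal decomposition of the modified update direction to get clean bounds, then solve the resulting recurrence for the suboptimality gap.

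The first step is to expand $L_F(w_{t+1}, u_{t+1}) - L_F(w_t, u_t)$ with the $L$-smoothness lower bound and plug in the two updates. The payoff comes from the key algebraic identity that the modified direction $g = \nabla_w L_C - \alpha \nabla_w L_F - \Pi_{\nabla_w L_F}\nabla_w L_C$ admits the orthogonal decomposition $g = g^\perp - \alpha \nabla_w L_F$, where $g^\perp = (I - \Pi_{\nabla_w L_F})\nabla_w L_C$ is perpendicular to $\nabla_w L_F$. This yields $\inp{\nabla_w L_F, g} = -\alpha \norm{\nabla_w L_F}^2$ (so the $w$-step actually increases $L_F$) and, by the bounded gradient assumption, $\norm{g}^2 = \norm{g^\perp}^2 + \alpha^2 \norm{\nabla_w L_F}^2 \leq (G^2 + \alpha^2)\norm{\nabla_w L_F}^2 \leq (1+G^2)\norm{\nabla_w L_F}^2$ whenever $\alpha \leq 1$ (which holds for $t \geq 1$).

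Substituting these into the smoothness bound together with $\eta_1 = 1/(2L)$ and $\eta_2 = \alpha/(2L(1+G^2))$, both the $u$-contribution $\eta_1 \norm{\nabla_u L_F}^2 - \tfrac{L \eta_1^2}{2}\norm{\nabla_u L_F}^2$ and the $w$-contribution $\alpha \eta_2 \norm{\nabla_w L_F}^2 - \tfrac{L \eta_2^2(G^2+1)}{2}\norm{\nabla_w L_F}^2$ collapse to positive multiples, giving
$$L_F(w_{t+1}, u_{t+1}) - L_F(w_t, u_t) \geq \frac{c_0 \alpha^2}{L(1+G^2)}\bigl(\norm{\nabla_u L_F}^2 + \norm{\nabla_w L_F}^2\bigr)$$
for an absolute constant $c_0 > 0$. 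The next step converts this gradient-norm ascent into a suboptimality recurrence. Using concavity of $L_F$ (implicit from the discussion in Section~\ref{sec:analysis_without_proj}) and the diameter bound $D$, we have $\Delta_t := L_F^* - L_F(w_t, u_t) \leq \norm{\nabla L_F(w_t, u_t)} \cdot D$, so $\norm{\nabla L_F}^2 \geq \Delta_t^2/D^2$. Writing $\beta_t = c_0 t^{-2c}/(LD^2(1+G^2))$, we obtain $\Delta_{t+1} \leq \Delta_t - \beta_t \Delta_t^2$. The standard inversion trick $\tfrac{1}{\Delta_{t+1}} \geq \tfrac{1}{\Delta_t(1-\beta_t \Delta_t)} \geq \tfrac{1}{\Delta_t} + \beta_t$, followed by telescoping and using $\sum_{t=1}^{T} t^{-2c} = \Theta\!\bigl(T^{1-2c}/(1-2c)\bigr)$ for $c < 1/2$, gives $\Delta_T = O\!\bigl(LD^2(1+G^2)(1-2c)/T^{1-2c}\bigr)$, which rearranges to the stated bound on $T$.

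The main obstacle is the step that converts the gradient-norm ascent into suboptimality decay: it needs a PL-type or concavity inequality on $L_F$, which is not made explicit in the hypotheses of the theorem and would have to be noted as an implicit assumption. A secondary obstacle is making all constants consistent so the bound $\norm{g}^2 \leq (1+G^2)\norm{\nabla_w L_F}^2$ is clean; this requires only $\alpha \leq 1$, which is automatic, but one must track it through the smoothness expansion to confirm that both the $u$- and $w$-terms remain positive under the prescribed learning rates. Once these are in place the telescoping argument is routine and produces exactly the claimed $O\!\bigl((LD^2(1+G^2)(1-2c)/\varepsilon)^{1/(1-2c)}\bigr)$ iteration complexity.
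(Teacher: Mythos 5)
Your proposal is correct and follows essentially the same route as the paper: the same orthogonal decomposition giving $\inp{\nabla_w L_F, g} = -\alpha\norm{\nabla_w L_F}^2$ and $\norm{g}^2 \le (1+G^2)\norm{\nabla_w L_F}^2$, the same use of concavity plus the diameter $D$ to convert gradient-norm ascent into the recurrence $\Delta_{t+1} \le \Delta_t - \beta_t\Delta_t^2$. The only (cosmetic) difference is that you solve the recurrence discretely via the inversion-and-telescoping trick while the paper passes to the continuous-time ODE $\dv{R_F}{t} = -\beta t^{-2c}R_F^2$; your observation that concavity of $L_F$ is used but not stated in the hypotheses is accurate --- the paper also invokes it only inside the proof.
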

\noindent
The above theorem gives us a convergence bound for $L_F$. 
\begin{proof}
Since $-L_F$ is convex and $L$-smooth, at the $(t+1)$-th timestep,
\begin{align*}
- &L_F(w_{t+1}, u_{t+1}) + L_F(w_t, u_t) \\&\leq - \inp{\nabla_u L_F, u_{t+1} - u_{t}} -  \inp{\nabla_w L_F, w_{t+1} - w_{t}} + L \cdot (\norm{u_{t+1} - u_t}^2 + \norm{w_{t+1} - w_t}^2)\\
&= - \eta_1 \norm{\nabla_u L_F}^2 + \eta_1^2 L \norm{\nabla_u L_F}^2 + \eta_2 \inp{\nabla_w L_F, \nabla_w L_C - \alpha \cdot \nabla_w L_F - \Pi_{\nabla_w L_F} \nabla_w L_C}\\
& + \eta_2^2 L \norm{\nabla_w L_C - \alpha \cdot \nabla_w L_F - \Pi_{\nabla_w L_F} \nabla_w L_C}^2\\
& = - \eta_1 \norm{\nabla_u L_F}^2 + \eta_1^2 L \norm{\nabla_u L_F}^2 - \alpha \eta_2 \norm{\nabla_u L_F}^2 + \alpha^2 \eta_2^2  L \norm{\nabla_w L_F}^2 + \eta_2^2 L \norm{\nabla_w L_C}^2\\
& \leq - \eta_1 \norm{\nabla_u L_F}^2 + \eta_1^2 L \norm{\nabla_u L_F}^2 - \alpha \eta_2 \norm{\nabla_u L_F}^2 + \eta_2^2  L (1 + G^2) \norm{\nabla_w L_F}^2.
\end{align*}
Using 
\[\eta_1 = \frac{1}{2L} \textrm{ and }\eta_2 = \frac{\alpha}{2L(1+G^2)}\]
we get,
\begin{align*}
L_F(w_t, u_t) - L_F(w_{t+1}, u_{t+1}) &\leq -\frac{1}{4L} \norm{\nabla_u L_F}^2 - \frac{\alpha^2}{4L(1+G^2)} \norm{\nabla_w L_F}^2 \\
& \leq - \frac{\alpha^2}{4L(1+G^2)}  ( \norm{\nabla_u L_F}^2 + \norm{\nabla_w L_F}^2) \\
&= - \frac{\alpha^2}{4L(1+G^2)}  \norm{\nabla L_F}^2.
\end{align*}
Let $R_F(t) := L_F(w^*, u^*) - L_F(w_t, u_t)$. Then using the above inequality,
\[R_F(t) - R_F(t+1) \geq \frac{\alpha^2}{4L(1+G^2)}  \norm{\nabla L_F}^2.\]
Also, by the concavity of $L_F$, we get,
\begin{align*}
R_F(t) &= L_F(w^*, u^*) - L_F(w_t, u_t)\\
& \leq \inp{\nabla_w L_F(w_t, u_t), w^* - w_t} \\ & + \inp{\nabla_u L_F(w_t, u_t), u^* - u_t}\\
& \leq \norm{\nabla L_F} \cdot D.
\end{align*}
Therefore,
\[ \norm{\nabla L_F} \geq \frac{R_F(t)}{D},\]
and
\[R_F(t+1) - R_F(t) \leq -\frac{\alpha^2}{4LD^2(1+G^2)}  R_F(t)^2.\]
We can analyze the time-continuous version of the above equation. It leads to following differential equation.
\[\dv{R_F}{t} = -\frac{\alpha^2}{4LD^2(1+G^2)}  R_F^2.\]
We use $\alpha = t^{-c}$, where $c < 0.5$ and let $\beta = 4LD^2(1+G^2)$. Also $R_F(0) = LD^2$ Then we get,
\begin{align*}
&\frac{\textrm{d}R_F}{R_F^2} = - \frac{\textrm{d}t}{\beta t^{2c}}\\
\implies &\frac{1}{LD^2} - \frac{1}{R_F(T)}  = - \frac{1}{(1-2c)\beta} T^{1-2c}.
\end{align*}
Since we want $R_F(T)$ to be small, we look at the iterations $T$ required to reduce it to $\varepsilon$.
\begin{align*}
& \frac{1}{\varepsilon} - \frac{1}{LD^2}  = \frac{1}{4LD^2(1+G^2)(1-2c)} T^{1-2c}\\
\implies& T = O\left( \left( \frac{LD^2(1+G^2)(1-2c)}{ \varepsilon} \right)^{\frac{1}{1-2c}} \right). 
\end{align*}

\end{proof}

\noindent
Using the above number of iterations, we can quantify how far $L_C(w^T)$ is from $L_C(w^*)$. 
\begin{theorem}[Price of Fairness] \label{thm:price_of_fairness}
Let $L_C$ also be an $L$-smooth and $G_C$-gradient bounded function and assume that the $L_F$ and $L_C$ satisfy Assumption~\ref{defn:bg}. Let $\alpha = t^{-c}$, for $c < 0.5$, in Algorithm~\ref{alg:grad}. Let $$D := \max_{u,w} \sqrt{\norm{u-u^*}^2 + \norm{w-w^*}^2}$$ and $\varepsilon > 0$. The algorithm uses the learning rates 
$$\eta_1 = \frac{1}{2L} \textrm{ and }\eta_2 = \frac{\alpha}{2L(1+G^2)}.$$
Let the number of iterations $T$ be the same as obtained in Thm~\ref{thm:fairness_bound}.
Suppose that the normal gradient descent method, using gradient updates $\nabla L_C(w)$, is $\delta$-close to the minimizer of $L_C$ after $T$ iterations, where
$\delta \leq 2D \cdot \sqrt{G_CG}.$
Then after $T$ iterations, we will have
$L_C(w_t) - L_C(w^*) \geq \max \br{ \delta,  \delta'}  ,$ 
\[\frac{1}{\delta'} = \frac{1}{LD^2} + \frac{1}{8(1 - c)^2 \varepsilon} O\left( \left( \frac{LD^2(1+G^2)(1-2c)}{ \varepsilon} \right)^{\frac{c}{1-2c}} \right)\]
\end{theorem}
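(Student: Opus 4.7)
The plan is to mirror the proof of Theorem~\ref{thm:fairness_bound} but track $R_C(t) := L_C(w_t) - L_C(w^*)$ from below, and then combine that with a direct comparison against ordinary gradient descent on $L_C$. Two separate lower bounds, $\delta$ and $\delta'$, must be established; their maximum is the claimed bound.

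For the $\delta$ part, the key observation is that the modified direction $g = \nabla_w L_C - \alpha \nabla_w L_F - \Pi_{\nabla_w L_F}\nabla_w L_C$ is obtained from $\nabla_w L_C$ by (i) discarding the component parallel to $\nabla_w L_F$ and (ii) adding an $\alpha\|\nabla_w L_F\|$ contribution along $\nabla_w L_F$. Both operations strictly reduce $\langle \nabla_w L_C, g\rangle$ compared to $\|\nabla_w L_C\|^2$, so per iteration the decrease in $L_C$ under the modified update is no larger than the decrease under a pure $\nabla_w L_C$ step. The hypothesis $\delta\le 2D\sqrt{G_CG}$ is meant to keep the comparison in the regime where the deviation of the two trajectories is controlled by the bounded-gradient assumption (Definition~\ref{defn:bg}), so this per-step comparison can be promoted to the end-to-end bound $L_C(w_T)-L_C(w^*) \ge \delta$ by treating the modified iteration as a perturbation of the normal iteration and invoking $\|\nabla L_C\|\le G\|\nabla L_F\|$.

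For the $\delta'$ part, I would reuse the smoothness/convexity chain from Theorem~\ref{thm:fairness_bound} applied to $L_C$. Using $L$-smoothness of $L_C$,
\begin{align*}
L_C(w_t) - L_C(w_{t+1}) \;\le\; \eta_2\,\langle \nabla_w L_C,\, g(w_t)\rangle + \tfrac{L\eta_2^2}{2}\|g(w_t)\|^2.
\end{align*}
Expanding both terms using orthogonality of $\nabla_w L_C - \Pi_{\nabla_w L_F}\nabla_w L_C$ against $\nabla_w L_F$, Cauchy--Schwarz, and $\|\nabla_w L_C\|\le G\|\nabla_w L_F\|$, yields a per-step inequality of the form $R_C(t)-R_C(t+1) \le C\alpha^2 R_C(t)^2 / (LD^2(1+G^2))$, after also invoking the convex linearization $R_C(t)\le \|\nabla_w L_C\|\,D$. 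Passing to the continuous-time ODE $dR_C/dt \ge -C\alpha^2 R_C^2/(LD^2(1+G^2))$ with $\alpha = t^{-c}$ and integrating from $1$ to $T$ gives $1/\delta' - 1/(LD^2) = (\text{const})\cdot T^{1-2c}$. Substituting the value of $T$ from Theorem~\ref{thm:fairness_bound} produces exactly the claimed expression for $1/\delta'$; the $(1-c)^2$ factor arises from combining integration of $t^{-2c}$ with the specific shape of $T$.

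The main obstacle is the $\delta$ lower bound. Showing that ordinary gradient descent on $L_C$ \emph{progresses faster} than the modified descent does not follow from a naive iteration-by-iteration comparison, because the two algorithms visit different iterates and therefore see different gradients. The cleanest route is to treat the modified update as a perturbed gradient method (perturbation $-\alpha \nabla_w L_F - \Pi_{\nabla_w L_F}\nabla_w L_C$) and to bound the cumulative deviation of the modified trajectory from the pure-gradient trajectory using Assumption~\ref{defn:bg}; the constraint $\delta \le 2D\sqrt{G_C G}$ is the precise slack that makes this deviation argument preserve the lower bound, and verifying that constant is where most care is required.
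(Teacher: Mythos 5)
There are two genuine gaps, and in both places your route diverges from what the paper actually does. For the $\delta'$ bound, you carry the wrong power of $\alpha$. In the paper's per-step estimate for $L_C$ the dominant negative term is $-\eta_2\norm{\nabla_w L_C}^2$ with $\eta_2=\frac{\alpha}{2L(1+G^2)}$, i.e.\ a \emph{single} power of $\alpha$, so the continuous-time relaxation is $\frac{\mathrm{d}R_C}{\mathrm{d}t}=-\beta' R_C^2\, t^{-c}$ and integration gives $\frac{1}{R_C(T)}-\frac{1}{LD^2}\propto T^{1-c}$. Your proposed per-step inequality with $\alpha^2$ integrates to $T^{1-2c}$, and since $T^{1-2c}=O\bigl(LD^2(1+G^2)(1-2c)/\varepsilon\bigr)$ by Theorem~\ref{thm:fairness_bound}, that collapses to $1/\delta' = 1/(LD^2)+O(1/\varepsilon)$ and does \emph{not} reproduce the factor $O\bigl((LD^2(1+G^2)(1-2c)/\varepsilon)^{c/(1-2c)}\bigr)$ in the statement; that factor arises exactly from $T^{1-c}$ via $\frac{1-c}{1-2c}=1+\frac{c}{1-2c}$. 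Relatedly, you discard the cross term $\eta_2\inp{\nabla_w L_C,\,\alpha\nabla_w L_F+\Pi_{\nabla_w L_F}\nabla_w L_C}$, but this is precisely where the hypothesis $\delta\leq 2D\sqrt{G_CG}$ is used in the paper: one multiplies the cross term by $R_C(t)^2/\delta^2\geq 1$ (valid while $R_C(t)\geq\delta$), bounds $\inp{\nabla_w L_C,\nabla_w L_F}\leq G_CG$ by gradient-boundedness, and then uses $\delta^2\leq 4D^2G_CG$ to bound the coefficient $\frac{1}{D^2}-\frac{2G_CG}{\delta^2}$ by $\frac{1}{2D^2}$, which is what produces the constant $8$ and keeps the recursion of the form $R_C(t+1)-R_C(t)\leq -\beta' t^{-c}R_C(t)^2$. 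You instead assign this hypothesis to a trajectory-deviation role it does not play.

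The second gap is the $\delta$ component of the max. The paper never compares the modified-update trajectory with the pure gradient-descent trajectory; it simply works under the standing assumption $R_C(t)\geq\delta$ for $t\leq T$ (needed, as above, to close the recursion), and the conclusion $R_C(T)\geq\max\br{\delta,\delta'}$ reflects that assumption rather than a proved domination of ordinary gradient descent over the modified method. Your perturbed-gradient coupling argument is attempting a strictly harder statement, and as you concede it does not close: the two methods see different gradients at different iterates, and even the per-step claim that both modifications ``strictly reduce'' $\inp{\nabla_w L_C, g}$ fails when $\inp{\nabla_w L_C,\nabla_w L_F}<0$, since then $-\alpha\inp{\nabla_w L_C,\nabla_w L_F}>0$. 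To match the paper you should drop the trajectory comparison entirely, keep the cross term, introduce $\delta$ through the assumption $R_C(t)\geq\delta$, and redo the integration with exponent $1-c$.
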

The above theorem gives us an estimation of the \textit{price of fairness} incurred by our algorithm, i.e., to achieve perfect fairness through the above model, the least amount of classification loss we have to sacrifice, compared to the minimum classification loss.
\begin{proof}
We obtain the price of fairness bound by analyzing the convergence of $L_C$.
\begin{align*}
&L_C(w_{t+1}) - L_C(w_t)\\ &\leq \inp{\nabla_w L_C, w_{t+1} - w_{t}} + \norm{w_{t+1} - w_t}^2\\
&= - \eta_2 \inp{\nabla_w L_C, \nabla_w L_C - \alpha \cdot \nabla_w L_F - \Pi_{\nabla_w L_F} \nabla_w L_C} + \eta_2^2 L \norm{\nabla_w L_C - \alpha \cdot \nabla_w L_F - \Pi_{\nabla_w L_F} \nabla_w L_C}^2\\
& = -\eta_2 \norm{\nabla_w L_C}^2 + \eta_2^2L\norm{\nabla_w L_C}^2 + \eta_2^2 \alpha^2 L\norm{\nabla_w L_F}^2 + \eta_2 \inp{\nabla_w L_C, \alpha \cdot \nabla_w L_F + \Pi_{\nabla_w L_F} \nabla_w L_C}\\
&\leq -\eta_2 \norm{\nabla_w L_C}^2 + \eta_2^2L(1+G^2)\norm{\nabla_w L_C}^2 + \eta_2 \inp{\nabla_w L_C, \alpha \cdot \nabla_w L_F + \Pi_{\nabla_w L_F} \nabla_w L_C}\\
&\leq -\frac{\alpha}{4L(1+G^2)}  \norm{\nabla_w L_C}^2 + \frac{2\alpha}{4L(1+G^2)} \inp{\nabla_w L_C, \nabla_w L_F}.
\end{align*}
Let $R_C(t) := L_C(w_t) - L_C(w^*)$. Then using the above inequality,
\begin{align*}
R_C(t) - R_C(t+1) \geq \frac{\alpha}{4L(1+G^2)}  \norm{\nabla_w L_C}^2 - \frac{2\alpha}{4L(1+G^2)} \inp{\nabla_w L_C, \nabla_w L_F}.
\end{align*}
Also, by the concavity of $L_C$, we get,
\begin{align*}
R_C(t) &= L_C(w^*) - L_C(w_t)\\
& \leq \inp{\nabla_w L_C(w_t, u_t), t_t - w^*}\\
&  \leq \norm{\nabla_w L_C} \cdot D.
\end{align*}
Therefore,
\[ \norm{\nabla_w L_C} \geq \frac{R_C(t)}{D},\]
and
\begin{align*}
R_C(t+1) - R_C(t) \leq  - \frac{\alpha}{4LD^2(1+G^2)}  R_C(t)^2 + \frac{2\alpha}{4L(1+G^2)} \inp{\nabla_w L_C, \nabla_w L_F}.
\end{align*}

Note that we want $R_C(t+1)$ to be smaller than $R_C(t)$ for all $t$. Infact we want to find the number of iterations in which it reduces to small $\delta > 0$. Correspondingly, we can assume that
\[R_C(t) \geq \delta \implies \frac{R_C(t)}{\delta} \geq 1.\] 
Therefore,
\begin{align*}
R_C(t+1) - R_C(t) &\leq - \frac{\alpha}{4LD^2(1+G^2)}  R_C(t)^2  + \frac{2\alpha}{4L(1+G^2)\delta^2} \inp{\nabla_w L_C, \nabla_w L_F} R_C(t)^2\\
& = - \frac{R_C(t)^2}{4L(1+G^2) t^c} \left( \frac{1}{D^2}  - \frac{2}{\delta^2} \inp{\nabla_w L_C, \nabla_w L_F} \right).
\end{align*}
Since $L_C$ is $G_C$-gradient bounded, we get
\[R_C(t+1) - R_C(t) \leq - \frac{R_C(t)^2}{4L(1+G^2)t^c} \left( \frac{1}{D^2}  - \frac{2G_CG}{\delta^2} \right).\]
Once again, we formulate it as a differential equation and solve it.  Note that $R_C(0) = LD^2.$
\[\dv{R_C}{t} =  - \frac{R_C^2}{t^c} \cdot  \beta' \implies  \frac{1}{R_C(T)} = \frac{1}{LD^2} + \frac{\beta'}{1 - c} T^{1-c}.\]
Substituting the value of $\beta'$, we get
\[\frac{1}{R_C(T)} = \frac{1}{LD^2} + \frac{1}{4L(1+G^2)(1 - c)}  \left( \frac{1}{D^2} - \frac{2G_CG}{\delta^2} \right) T^{1-c}.\]
Since 
\[\delta \leq 2D \cdot \sqrt{G_CG} \implies  \frac{1}{2D^2} \leq \frac{2G_CG}{\delta^2},  \]
we get
\[\frac{1}{R_C(T)} \leq \frac{1}{LD^2} + \frac{1}{8LD^2(1+G^2)(1 - c)} T^{1-c}.\]
Substituting the value of $T$ from the previous theorem, we get
\begin{align*}
\frac{1}{R_C(T)} \leq \frac{1}{LD^2} + &\frac{1}{8LD^2(1+G^2)(1 - c)} O\left( \left( \frac{LD^2(1+G^2)(1-2c)}{\varepsilon} \right)^{\frac{1-c}{1-2c}} \right) \\
&\leq \frac{1}{LD^2} + \frac{1}{8(1 - c)^2 \varepsilon} O\left( \left( \frac{LD^2(1+G^2)(1-2c)}{ \varepsilon} \right)^{\frac{c}{1-2c}} \right)
\end{align*}
\end{proof}

\subsubsection{Analysis of Algorithm~\ref{alg:acc_grad}}
Using the results of \cite{cohen2018acceleration}, we obtain the following theoretical bounds on the classification loss and adversary loss achieved by Algorithm~\ref{alg:acc_grad}.
They prove the following theorem regarding their algorithm.

\begin{theorem}[Convergence of AGD with noisy gradient \cite{cohen2018acceleration}] \label{thm:cohen}
Let $f$ be an $L$-smooth function and let $w_0$ be an arbitrary initial point. Suppose the noisy gradient of $f$ is of the form 
$g(w_t) = \nabla_w f(w_t) + \lambda_t,$
where $\lambda_t$ is the noise at iteration $t$. For $\psi(w) = \frac{1}{2}\norm{w}^2$, if sequences $w_t, p_t, q_t$ evolve according to AGD+ \cite{cohen2018acceleration}, $R_{w^*} = \max_w \norm{w - w^*}$ and $\frac{a_t^2}{A_t} \leq \frac{1}{L}$, then
\[f(q_t) - f(w^*) \leq \frac{\mathcal{D}(w_0, w^*)}{A_t} + R_{w^*} \frac{\sum_{i=1}^t a_i \norm{\lambda_t}}{A_t},\]
where $\mathcal{D}(w_1,w_2) = \frac{1}{2} \norm{w_1}^2 + \frac{1}{2} \norm{w_2}^2 - 2 \inp{w_1, w_2}$.
\end{theorem}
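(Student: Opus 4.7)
The plan is to adapt the standard estimating-sequence (potential function) analysis of Nesterov's accelerated gradient method to the noisy-oracle setting. The key object is a Lyapunov potential
\[\Phi_t \;\defeq\; A_t\bigl(f(q_t) - f(w^*)\bigr) + \mathcal{D}(p_{t+1}, w^*),\]
and the goal is to prove an approximate one-step decrease of the form $\Phi_t - \Phi_{t-1} \leq a_t R_{w^*}\|\lambda_t\|$. Telescoping from $0$ to $t$ and using non-negativity of $\mathcal{D}(\cdot, w^*)$ then yields $A_t(f(q_t)-f(w^*)) \leq \mathcal{D}(w_0,w^*) + R_{w^*}\sum_{i=1}^t a_i\|\lambda_i\|$, which is the claimed bound after dividing by $A_t$.

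The first step is to exploit $L$-smoothness together with the specific convex-combination structure of $p_t$ and $q_t$. With $\psi(w)=\tfrac12\|w\|^2$ we have $\nabla\psi = \mathrm{id}$, so $p_t = (A_{t-1}/A_t)q_{t-1} + (a_t/A_t)w_{t-1}$ and $q_t = (A_{t-1}/A_t)q_{t-1} + (a_t/A_t)w_t$, which gives $q_t - p_t = (a_t/A_t)(w_t - w_{t-1}) = -(a_t^2/A_t)g(w_{t-1})$. Together with the step-size condition $a_t^2/A_t \leq 1/L$, the smoothness inequality at $p_t$ bounds $f(q_t) \leq f(p_t) + (a_t/A_t)\langle \nabla f(w_{t-1}), w_t - w_{t-1}\rangle + \tfrac{L}{2}\|q_t - p_t\|^2$, where the quadratic term will be designed to cancel against the mirror step's progress.

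Next I would invoke convexity of $f$ twice: once to bound $f(p_t) \leq (A_{t-1}/A_t)f(q_{t-1}) + (a_t/A_t)f(w_{t-1})$ via Jensen's inequality, and once to rewrite $a_t(f(w_{t-1})-f(w^*)) \leq \langle a_t\nabla f(w_{t-1}), w_{t-1}-w^*\rangle$. Because $\nabla\psi$ is the identity, the step $w_t = w_{t-1} - a_t g(w_{t-1})$ is simultaneously a gradient step and a mirror step, so the standard three-point identity yields $\langle a_t g(w_{t-1}), w_{t-1}-w^*\rangle = \mathcal{D}(w_{t-1},w^*) - \mathcal{D}(w_t,w^*) - \mathcal{D}(w_t,w_{t-1})$. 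Combining this with the smoothness bound above and the convex-combination definition of $p_{t+1}$ produces a clean cancellation of the second-order terms and leaves
\[\Phi_t - \Phi_{t-1} \;\leq\; \bigl\langle a_t\bigl(\nabla f(w_{t-1}) - g(w_{t-1})\bigr),\, w_{t-1} - w^*\bigr\rangle.\]

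The main obstacle, and the step specific to the noisy setting, is handling this residual cleanly. Since $\nabla f(w_{t-1}) - g(w_{t-1}) = -\lambda_t$, Cauchy--Schwarz and the diameter bound give $\Phi_t - \Phi_{t-1} \leq a_t\|\lambda_t\|\cdot\|w_{t-1}-w^*\| \leq a_t R_{w^*}\|\lambda_t\|$. The subtle point is that the noise enters \emph{linearly}, not quadratically as one might fear for an accelerated method; this happens because the mirror-descent reinterpretation extracts a first-order noise error whose coefficient is the diameter $R_{w^*}$, while the second-order term $\tfrac{L}{2}\|q_t-p_t\|^2 = \tfrac{L(a_t^2/A_t^2)}{2}\|g(w_{t-1})\|^2$ is absorbed into the telescope via $a_t^2/A_t \leq 1/L$ rather than surviving as an additive quadratic-in-noise error. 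Finally, telescoping from $1$ to $t$ and noting $p_1 = w_0$ (or bounding $\mathcal{D}(p_1,w^*) \leq \mathcal{D}(w_0,w^*)$ by convexity of $\mathcal{D}(\cdot,w^*)$ in the first argument) together with dividing by $A_t$ produces the stated inequality.
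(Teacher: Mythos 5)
The paper does not actually prove this statement: Theorem~\ref{thm:cohen} is imported verbatim from \cite{cohen2018acceleration} and used as a black box in the proof of Theorem~\ref{thm:thm_agd}, so there is no in-paper argument to compare yours against. On its merits, your sketch is the right reconstruction of how the cited paper argues: a potential of the form $A_t\bigl(f(q_t)-f(w^*)\bigr)$ plus a Bregman term, an approximate one-step decrease in which the noise enters \emph{linearly} via the mirror-descent cross term $\langle a_t\lambda_t,\,\cdot-w^*\rangle$ (bounded by $a_tR_{w^*}\norm{\lambda_t}$), cancellation of the quadratic $\tfrac{L}{2}\norm{q_t-p_t}^2$ against $-\mathcal{D}(w_t,w_{t-1})$ using $a_t^2/A_t\le 1/L$, and a telescope. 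Your observation about why acceleration does not amplify the noise quadratically is exactly the point of that paper.

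Two issues are worth fixing. First, and substantively: the descent-lemma step $f(q_t)\le f(p_t)+\langle\nabla f(p_t),q_t-p_t\rangle+\tfrac{L}{2}\norm{q_t-p_t}^2$ requires the gradient evaluated at $p_t$, and the clean cancellation you describe comes from also linearizing $f$ at $p_t$ (once against $q_{t-1}$, once against $w^*$), not from Jensen at $p_t$ plus a linearization at $w_{t-1}$. As written, you substitute $\nabla f(w_{t-1})$ into the smoothness cross term, which leaves an uncontrolled error $\langle\nabla f(p_t)-\nabla f(w_{t-1}),q_t-p_t\rangle$ of order $L\norm{p_t-w_{t-1}}\cdot\norm{q_t-p_t}$ that does not telescope. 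This gap mirrors the paper's own transcription of AGD+, which writes $g(w_{t-1})$ where the cited algorithm queries the (noisy) gradient at the extrapolated point $p_t$; if you restore that, your argument closes. Second, minor: the displayed $\mathcal{D}(w_1,w_2)=\tfrac12\norm{w_1}^2+\tfrac12\norm{w_2}^2-2\inp{w_1,w_2}$ carries a factor $2$ on the inner product and is therefore not the Bregman divergence of $\psi$ and not nonnegative; your appeal to nonnegativity of $\mathcal{D}(\cdot,w^*)$ implicitly corrects this to $\tfrac12\norm{w_1-w_2}^2$, which is what the telescoping step needs. Likewise, convexity of $\mathcal{D}(\cdot,w^*)$ bounds $\mathcal{D}(p_1,w^*)$ by the maximum of the endpoint values, not by $\mathcal{D}(w_0,w^*)$ alone; the cleaner route is to put $\mathcal{D}(w_t,w^*)$ (the mirror iterate) in the potential and use $p_1=w_0$ from the initialization.
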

\noindent
Using Theorem~\ref{thm:cohen}, we derive bounds on how close we are to the optimal point after $T$ iterations. 
%
\begin{theorem}[Convergence of Algorithm~\ref{alg:acc_grad}] \label{thm:thm_agd}
Suppose $L_C$ is an $L_1$-smooth function, $G_C$ gradient-bounded function and $L_F$ is an $L_2$-smooth, $G_F$-gradient-bounded function. Let $u_0, w_0$ be an arbitrary initial point. If sequences $u_t, w_t, p_t, q_t$ evolve according to Algorithm~\ref{alg:acc_grad} with constant $\alpha = \max\br{\frac{1}{L_1}, \frac{1}{L_2}}$ and $R_{u^*, w^*} = \max_{u,w} \sqrt{\norm{u - u^*}^2 + \norm{w - w^*}^2}$, then after $T$ iterations, %
\begin{align*}
&L_C(q_T) - L_C(w^*) \leq \frac{\mathcal{D}(w_0, w^*)}{A_T} + 2R_{w^*}G_F, \\
&L_F(u^*, w^*) - L_F(u_T, q_T) \leq \frac{\mathcal{D}(w_0, w^*) + D(u_0,u^*) }{A_T} + \frac{2R_{u^*, w^*}G_C}{\alpha},
\end{align*}
where $\mathcal{D}(w_1,w_2) = \frac{1}{2} \norm{w_1}^2 + \frac{1}{2} \norm{w_2}^2 - 2 \inp{w_1, w_2}$.
\end{theorem}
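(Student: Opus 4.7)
The plan is to invoke Theorem~\ref{thm:cohen} twice --- once to bound $L_C(q_T)-L_C(w^*)$ and once to bound $L_F(u^*,w^*)-L_F(u_T,q_T)$ --- by viewing the modified gradient
\[g(w) \;=\; \nabla_w L_C(w) - \alpha\,\nabla_w L_F(w) - \Pi_{\nabla_w L_F(w)}\nabla_w L_C(w)\]
as a \emph{noisy} gradient oracle for $L_C$ in one case and (after rescaling by $-1/\alpha$) for $L_F$ in the other. The $u$-sequence is handled separately: since it is plain gradient ascent on the concave function $L_F(\cdot,w_t)$, one estimates its contribution to the $L_F$-gap using a standard one-line concavity argument and folds the resulting $D(u_0,u^*)/A_T$ term into the combined bound.

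For the classification bound, I would write $g(w_{t-1}) = \nabla_w L_C(w_{t-1}) + \lambda_t^C$ with noise $\lambda_t^C = -\alpha\nabla_w L_F - \Pi_{\nabla_w L_F}\nabla_w L_C$. Since the projection operator is a contraction, $\norm{\Pi_{\nabla_w L_F}\nabla_w L_C}\le\norm{\nabla_w L_C}\le G_C$, and by $G_F$-gradient-boundedness $\norm{\alpha\nabla_w L_F}\le \alpha G_F$, so $\norm{\lambda_t^C}$ is controlled by a constant multiple of $G_F$ for the stated choice $\alpha=\max\{1/L_1,1/L_2\}$. Plugging into Theorem~\ref{thm:cohen}, the noise term simplifies because $\sum_i a_i = A_T$, and one obtains $L_C(q_T)-L_C(w^*)\le \mathcal{D}(w_0,w^*)/A_T + 2R_{w^*}G_F$.

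For the adversarial bound, the $w$-update $w_t = w_{t-1}-a_t g(w_{t-1})$ can be rewritten as ascent on $L_F$ by noting $-g(w)/\alpha = \nabla_w L_F(w) + \lambda_t^F$, where $\lambda_t^F = \alpha^{-1}(\Pi_{\nabla_w L_F}\nabla_w L_C - \nabla_w L_C)$. Again the contraction property of the projection and $G_C$-gradient-boundedness give $\norm{\lambda_t^F}\le 2G_C/\alpha$. Apply Theorem~\ref{thm:cohen} to $-L_F(\cdot,\cdot)$ in the $w$ coordinate to obtain $L_F(u_T,w^*)-L_F(u_T,q_T) \le \mathcal{D}(w_0,w^*)/A_T + 2R_{u^*,w^*}G_C/\alpha$, and use concavity in $u$ together with the gradient-ascent step on $u$ to pick up the additional $D(u_0,u^*)/A_T$ penalty. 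Adding these two pieces via the decomposition $L_F(u^*,w^*)-L_F(u_T,q_T) = [L_F(u^*,w^*)-L_F(u_T,w^*)] + [L_F(u_T,w^*)-L_F(u_T,q_T)]$ yields the claimed inequality.

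The main obstacle is the asymmetry between the accelerated $w$-dynamics and the plain gradient-ascent $u$-dynamics: Theorem~\ref{thm:cohen} is a single-variable statement, so care is needed to isolate the $u$- and $w$-contributions to the $L_F$-gap without double-counting. The cleanest route is the decomposition above, which relies on $L_F$ being concave jointly in $(u,w)$ so that each summand can be bounded independently; the second summand reuses the AGD argument already carried out for $L_C$ with the rescaled noise $\lambda_t^F$, while the first is absorbed into the extra $D(u_0,u^*)/A_T$ term from the ascent step. Verifying that the learning-rate choice $a_t^2/A_t \le 1/L$ required by Theorem~\ref{thm:cohen} holds with $L=\max\{L_1,L_2\}$ under $\alpha=\max\{1/L_1,1/L_2\}$ is a routine check.
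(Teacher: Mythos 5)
Your proposal follows essentially the same route as the paper's proof: Theorem~\ref{thm:cohen} is applied twice, once reading the modified update as a noisy gradient oracle for $L_C$ with noise $\lambda_t = -(\alpha\nabla_w L_F + \Pi_{\nabla_w L_F}\nabla_w L_C)$, and once, after dividing by $-\alpha$, as a noisy oracle for $L_F$ with noise $\frac{1}{\alpha}(\nabla_w L_C - \Pi_{\nabla_w L_F}\nabla_w L_C)$ and rescaled step sizes $a_t' = \alpha a_t = \frac{1}{L_1L_2\sqrt{t}}$; the step-size condition $a_t^2/A_t \le a_t \le 1/L_i$ is verified exactly as you describe. Your explicit decomposition $L_F(u^*,w^*)-L_F(u_T,q_T) = [L_F(u^*,w^*)-L_F(u_T,w^*)] + [L_F(u_T,w^*)-L_F(u_T,q_T)]$ is a more careful treatment of the $u$-variable than the paper gives: the paper simply applies the theorem to the joint variable with regularizer $\psi'(w)=\frac{\alpha}{2}\norm{w}^2$, whose Bregman divergence scales to $\alpha\mathcal{D}$, and folds $D(u_0,u^*)$ into the numerator. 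Neither treatment fully reconciles the constant-step plain ascent on $u$ with the $A_T$-weighted accelerated scheme on $w$, so you are at the same level of rigor as the source here.

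The one point where your argument does not deliver the stated conclusion is the noise bound for the $L_C$ application. You correctly note that the projection is contractive with respect to the vector being projected, so $\norm{\Pi_{\nabla_w L_F}\nabla_w L_C} \le \norm{\nabla_w L_C} \le G_C$; this yields $\norm{\lambda_t^C} \le \alpha G_F + G_C$, which is \emph{not} a constant multiple of $G_F$ and does not produce the claimed term $2R_{w^*}G_F$ unless one additionally assumes a relation between the two gradient norms such as Definition~\ref{defn:bg}. The paper instead asserts $\norm{\lambda_t} \le 2\norm{\nabla_w L_F} \le 2G_F$, which silently requires $\norm{\Pi_{\nabla_w L_F}\nabla_w L_C} \le \norm{\nabla_w L_F}$ --- a bound the projection does not satisfy in general. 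So your more careful estimate actually exposes a gap in the stated theorem rather than closing it; to match the statement you would need to either invoke the bounded-gradient assumption or restate the first bound with $\alpha G_F + G_C$ in place of $2G_F$.
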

\begin{proof}
The proof requires an application of the convergence bounds of \cite{cohen2018acceleration} for both $L_C$ and $L_F$. While proving the bound on $L_C$ is straight-forward, proving the bound on $L_F$ requires taking into account the $\alpha$-factor in the gradient update and the numbers $\br{a_t}_t$.

We first prove the bound on $L_C(q_T)$. For the updates in Algorithm~\ref{alg:acc_grad},
\[a_t = \frac{1}{\alpha L_1L_2\sqrt{t}} \leq \frac{1}{L_1 \sqrt{t}} \leq \frac{1}{L_1}.\]
Therefore, it satisfies the condition
\[\frac{a_t^2}{A_t} \leq a_t \leq \frac{1}{L_1}.\]
From the context of optimizing $L_C$, in Algorithm~\ref{alg:acc_grad}, we have
\[\lambda_t = -\left(\alpha \cdot \nabla_w L_F(w_t) + \Pi_{\nabla_w L_F(w_t)} \nabla_w L_C(w_t)\right). \]

Therefore, $\norm{\lambda_t} \leq 2 \norm{\nabla_w L_F} \leq 2G_F$.
Plugging in these values in Thm~\ref{thm:cohen}, we get
\[L_C(q_T) - L_C(w^*) \leq \frac{\mathcal{D}(w_0, w^*)}{A_T} + 2R_{w^*}G_F.\]

Next we prove the bound on $L_F(u_T, q_T)$. 
From the context of optimizing $L_F$, in Algorithm~\ref{alg:acc_grad}, we have
\[a_t' = \frac{1}{L_1L_2\sqrt{t}} \leq \frac{1}{L_2 \sqrt{t}} \leq \frac{1}{L_2}.\]
Note that the numbers (in this case $a_t'$) for optimizing $L_F$ are different than $a_t$ because the $\alpha$-factor on $\nabla_w L_F$ in the gradient is removed by the $1/\alpha$-factor in $a_t$.
Also in this case, the function $\psi'(w) = \frac{\alpha}{2} \norm{w}^2$ and hence the Bregman divergence corresponding to this function $\mathcal{D}'(w_1, w_2) = \alpha\mathcal{D}(w_1,w_2)$.
Finally the noise part $\lambda_t'$ here is
\[\lambda_t' = \frac{1}{\alpha} (\nabla_w L_F(w_t) - \Pi_{\nabla_w L_F(w_t)} \nabla_w L_C(w_t)). \]
Therefore,
$\norm{\lambda_t'} \leq \frac{G_C}{\alpha}$,
and the bound on $L_F(u_T, q_T)$ is
\begin{align*}
L_F(u^*, w^*) - L_F(u_T, q_T) &\leq \frac{\alpha (\mathcal{D}(w_0, w^*) + D(u_0,u^*) )}{ \alpha A_T} + \frac{2R_{u^*, w^*}G_C}{\alpha}\\
&= \frac{\mathcal{D}(w_0, w^*) + D(u_0,u^*) }{A_T} + \frac{2R_{u^*, w^*}G_C}{\alpha}.
\end{align*}

This completes the proof of Theorem~\ref{thm:thm_agd}.
\end{proof}
The theorem also tells us that, by our earlier solution characterization in Definition~\ref{defn:solution}, after $T$ iterations the model obtained is a $$\left( \frac{\mathcal{D}(w_0, w^*) + D(u_0,u^*) }{A_T} + 2R_{u^*, w^*}G_C, \frac{\mathcal{D}(w_0, w^*)}{A_T} + 2R_{w^*}G_F \right) \textrm{-solution.} $$

\begin{figure*}[t]
\centering
  \includegraphics[width=\linewidth]{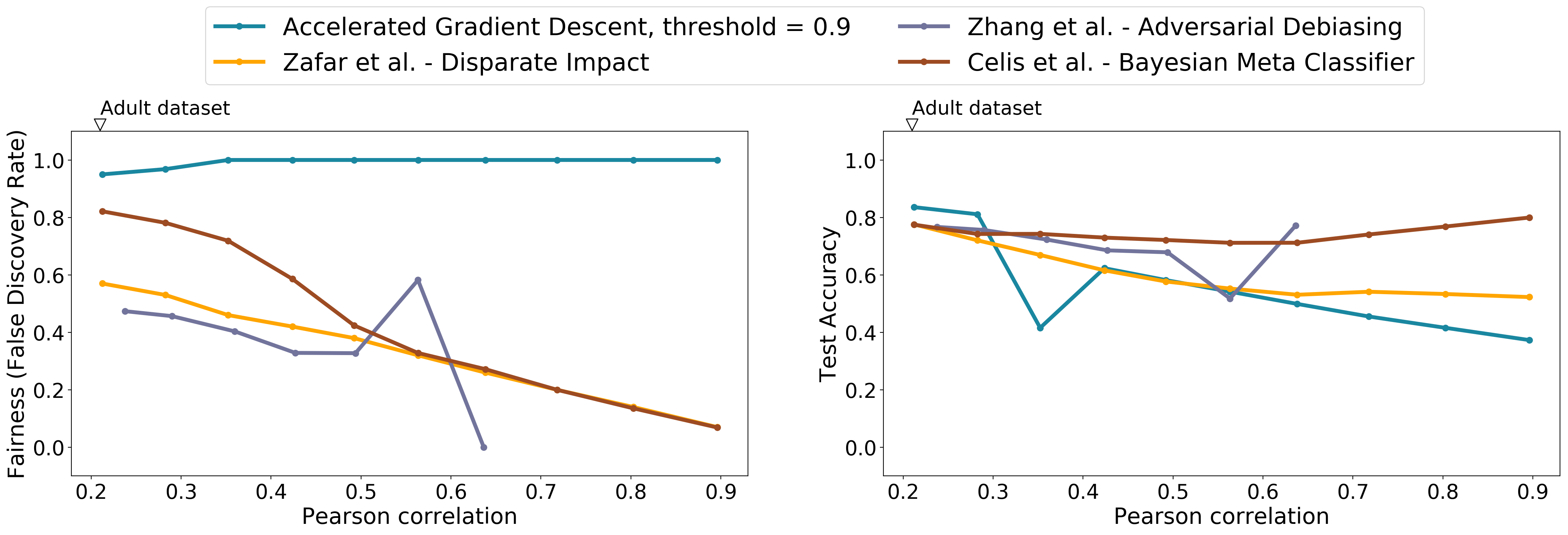}
\caption{Comparison of test accuracy and false discovery rate obtained using different algorithms for varying correlation between the class label and sensitive attribute.
Note that the left-most point with correlation = 0.21 corresponds to the Adult dataset.
For few of the correlation values, the implementations of \cite{zhang2018mitigating} and \cite{zafar2017fairness} gave either errors or all 0s as output. Those datapoints are not included in the figure.} 
\label{fig:figure13}
\end{figure*}

\section{False Discovery Rate} \label{sec:fdr}
In this section, we suggest and analyze a model to find a classifier that ensures false discovery parity.

\subsection{Model}
The classifier and the classification remains the same while ensuring false discovery parity (Definition~\ref{def:fdr}).
The main difference here is the adversary.
Note that in this case, the adversary has access to class label $y$ as well and the adversary model and loss function change accordingly.

\subsubsection{Fairness Adversary}
Since we want to ensure statistical parity, we look at how well we can predict the sensitive variable $z$ using the classifier output and $y$. Correspondingly, the fairness adversary will be a classifier $g$, where for a particular $d > 0$ and $u \in \R^3$,
$g = \sigma(u^\top[1 \; f(x) \; y]).$


\subsubsection{Fairness Adversary Loss}
The fairness adversary loss function is
%
\begin{align*}
L_F(u,w) =  - \textrm{log-loss}_S(g) - \frac{\mu}{2} \left( \sum_{\substack{i \in [N] \\ \mid y_i = 0, \\ z_i = 1}} w^\top x_i \cdot \sum_{\substack{i \in [N]\\ \mid z_i = 0}} w^\top x_i - \sum_{\substack{i \in [N] \\ \mid y_i = 0, \\ z_i = 0}} w^\top x_i \cdot \sum_{\substack{i \in [N]\\ \mid z_i = 1}} w^\top x_i  \right)^2.
\end{align*}
where $\mathbb{P}[G_j]$ is the probability that the sensitive attribute is $j$ in the training set.
The second part of the loss function once again is a regularizer to check whether fairness is satisfied. 

\subsection{Empirical Results}
The accuracy and false discovery rate achieved by our algorithm, along with comparisons with other algorithms, is presented in Figure~\ref{fig:figure13}.
From the figure, it can be seen that our algorithms achieve high false discovery rate in most cases and comparable accuracy with respect to other algorithms.

\section{Limitations and Future Work}
The models and algorithms presented in this work aim to exploit the paradigm of adversarial learning to ensure fairness but there is scope for further research.
The algorithm uses modified gradient updates to find the optimal fair and accurate model. 
Similar to the work of \cite{agarwal2018reductions}, other saddle point techniques can also be compared to our approach.
From a game-theoretic perspective, the problem is equivalent to finding an approximate Nash equilibrium \cite{freund1996game}, and existing work  can be explored to obtain better algorithms.
%

%
The model of adversarial learning should also be explored in the context of ensuring \textit{individual fairness}, as considered in \cite{dwork2012fairness, zemel2013learning}, instead of just \textit{group-fairness} metrics considered here.
The theorems for both the algorithms can be improved, including removing the assumptions on $\alpha$. 
As shown by experiments in Section~\ref{sec:alpha_expt}, the performance of the algorithm is not significantly affected by changing $\alpha$.
%
Finally, 
%
as part of future work, similar adversarial models should be constructed for other metrics, continuous fairness features and multi-valued sensitive attributes and labels.

\bibliographystyle{plain}
\bibliography{references}

\end{document}